\newcommand{\R}{{\mathbb{R}}}
\newcommand{\E}{{\mathbb{E}}}
\newcommand{\bbP}{{\mathbb{P}}}
\newcommand{\bbI}{{\mathbbm{1}}}    
\newcommand{\N}{{\mathbb{N}}}
\theoremstyle{plain}
\newtheorem{theorem}{Theorem}[section]
\newtheorem{lemma}[theorem]{Lemma}
\newtheorem{corollary}[theorem]{Corollary}
\theoremstyle{definition}
\theoremstyle{remark}
\newtheorem{remark}[theorem]{Remark}
\icmltitlerunning{Convergence Guarantees for Deep Epsilon Greedy Policy Learning}
\begin{document}

\twocolumn[
\icmltitle{Convergence Guarantees for Deep Epsilon Greedy Policy Learning}



\icmlsetsymbol{equal}{*}

\begin{icmlauthorlist}
\icmlauthor{Michael Rawson}{equal,umd}
\icmlauthor{Radu Balan}{equal,umd}
\end{icmlauthorlist}

\icmlaffiliation{umd}{Department of Mathematics, University of Maryland, College Park, Maryland, USA}

\icmlcorrespondingauthor{Michael Rawson}{rawson@umd.edu}

\icmlkeywords{Machine Learning, ICML}

\vskip 0.3in
]



\printAffiliationsAndNotice{\icmlEqualContribution} 

\begin{abstract}
Policy learning is a quickly growing area. As robotics and computers control day-to-day life, their error rate needs to be minimized and controlled. There are many policy learning methods and bandit methods with provable error rates that accompany them. We show an error or regret bound and convergence of the Deep Epsilon Greedy method which chooses actions with a neural network's prediction. We also show that Epsilon Greedy method regret upper bound is minimized with cubic root exploration. In experiments with the real-world dataset MNIST, we construct a nonlinear reinforcement learning problem. We witness how with either high or low noise, some methods do and some do not converge which agrees with our proof of convergence. 
\end{abstract}

\section{Introduction and Related Work}
In recent years, computer automation has taken control over many processes and tasks. Researchers have searched for the best methods for computers to optimize their performance. We call a method that chooses actions a `policy'. Many methods are shown to converge to the best possible policy over time with various assumptions, like the bandit assumption. The Epsilon Greedy method is one of the oldest methods \cite{sutton_reinforcement_1998}. And when the state/context of a system is constant, Epsilon Greedy is known to converge to the optimal policy \cite{auer_finite-time_2002}. Other methods like Upper Confidence Bound (UCB) and Thompson Sampling also converge (\cite{auer_finite-time_2002}, \cite{agrawal_analysis_2012}). When the state/context may change, the problem is more challenging. Many papers, including this one, will assume that the state is a random variable sampled at each time step and independent of past actions and rewards. This is known as \emph{unconfoundedness} \cite{athey_policy_2020}. And in (\cite{athey_policy_2020}, \cite{xu2021doubly}), convergence is shown for the Doubly Robust method. When state $X$ is sampled on a low dimensional manifold, fast convergence is shown in \cite{chen_doubly_2020}. Other neural network based policy learning methods also converge \cite{zhou2020neural}, \cite{rawson_deep_2021}. We will show convergence for Deep Epsilon Greedy method under reasonable assumptions and then discuss variations and generalizations. The convergence of policy learning depends on the policy and the neural network. The neural network's convergence that we use goes back to \cite{gyorfi_distribution-free_2002} which bounds the neural network's parameter weights. This is equivalent to a Lipschitz bound on the employed class of neural networks \cite{zou2019lipschitz}.

\section{Algorithm}
First we describe the well known Epsilon Greedy method in Algorithm \ref{algo:deep_ep_greedy}. We will use and analyze this algorithm throughout this paper. This method runs for $M$ time steps and at each time step takes in a state vector, $X_t$, and chooses an action, $D_t$, from $A=\{action_1,...,action_K\}$. The reward at each time step is recorded and the attempt is to maximize the total rewards received. 

\begin{algorithm}[t]
\SetAlgoLined
    \textbf{Input:}\\
    $M \in \mathbb{N}$ : Total time steps \\
    $m \in \mathbb{N}$ : Context dimension \\
    $ X \in \mathbb{R}^{M \times m}$ where state $X_{t}\in\mathbb{R}^m$ for time step $t$  \\ 
    $ A = \{action_1,...,action_K\}$ : Available Actions \\
    $ \Phi :\mathbb{R}^m \rightarrow \mathbb{R} $ : Untrained Neural Network \\
    $ Reward : \mathbb{N}_{[1,K]} \rightarrow \mathbb{R}$  \\
    \textbf{Output:}\\
    $ D \in \N^M $ : Decision Record \\
    $ R \in \mathbb{R}^{M}$ where $R_{t}$ stores the reward from time step $t$  \\ 
    \textbf{Begin:}\\
    \For{t = 1, 2, ..., $M$}{
        \For{j = 1 ... K}{
            $ \hat \mu_{action_j} = \Phi_{j,t}(X_{t}) $\ (predict reward) 
        }
        $ \eta \sim $ Uniform(0,1)  \\
        $\epsilon_t = 1/t$ \\
        \eIf{$\eta > \epsilon_t$}{
            $D_{t} =  \arg \max_{1 \le j \le K } \ \hat \mu_{action_j} $ 
        }{
            $ \rho \sim $ Uniform(\{1,...,K\})  \\
            $D_{t} = A_{\rho}$ 
        }
        $R_{t} = Reward(D_{t})$ \\
        (Training Stage) \\
        \For{j = 1 ... K}{
            $ S_j = \{l : 1\le l\le t,\ D_l = j\}$ \\
            TrainNNet($\Phi_{j,t-1}, input=X_{S_j}, output=R_{S_j} $)
        }
    }
\caption{Deep Epsilon Greedy}
\label{algo:deep_ep_greedy}
\end{algorithm}

\section{Results}

\begin{theorem} 
\label{nnet} [\cite{gyorfi_distribution-free_2002} Theorem 16.3]
Let $\Phi_n$ be a neural network with $n$ parameters and the parameters are optimized to minimize MSE of the training data, $S = \{(X_i,Y_i)\}$ where $X$ and $Y$ are almost surely bounded. 
Let the training data, be size n, and $Y_i = R(x_i) \sim N(\mu_{x_i}, \sigma_{x_i})$ where 
$R : \R^m \rightarrow \R$. 
Then for $n$ large enough, 
\begin{align} \label{eq:nnet}
 \mathbb{E}_S \int | \Phi_n(x) - \mathbb{E}(R(x)) |^2 dP(x) \le c\sqrt{\frac{\log(n)}{n}} 
\end{align}
for some $c > 0$.

\end{theorem}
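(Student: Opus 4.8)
The plan is to run the classical approximation--estimation decomposition for least-squares regression. Write $m(x):=\mathbb{E}(R(x))$ for the regression function and note that, because $Y_i$ has conditional mean $m(x_i)$, the excess risk of any predictor $\phi$ equals its squared $L_2(P)$ error: $\mathbb{E}[(\phi(X)-Y)^2]-\mathbb{E}[(m(X)-Y)^2]=\|\phi-m\|_{L_2(P)}^2$. Since $\Phi_n$ is the empirical risk minimizer over the class $\mathcal{F}_n$ of networks, bounding the left-hand side of \eqref{eq:nnet} reduces to controlling this excess risk. Using the a.s.\ boundedness of $X$ and $Y$ (and truncating the network output at the known bound, which only helps), I would first reduce to a uniformly bounded function class so that all subsequent concentration estimates apply.

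Next I would split $\|\Phi_n-m\|_{L_2(P)}^2$ into an \emph{approximation} term $\inf_{\phi\in\mathcal{F}_n}\|\phi-m\|_{L_2(P)}^2$ and an \emph{estimation} term measuring how far the empirical minimizer sits from the best-in-class predictor. For the approximation term I would invoke a quantitative universal-approximation bound: for a one-hidden-layer sigmoidal network with $k$ neurons approximating the bounded target $m$, Barron-type estimates give an $L_2$ squared error of order $1/k$. For the estimation term I would pass to empirical-process theory, the governing quantity being the covering number of $\mathcal{F}_n$. Networks with bounded weights have metric entropy $\log\mathcal{N}(\varepsilon,\mathcal{F}_n,\|\cdot\|_n)\lesssim k\log(1/\varepsilon)$, so a peeling/chaining argument of the type underlying the general least-squares theorem in \cite{gyorfi_distribution-free_2002} yields an estimation error of order $k\log(n)/n$.

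Balancing the two contributions is what produces the stated rate: choosing the width $k=k_n\asymp\sqrt{n/\log n}$ makes both the approximation error $1/k$ and the estimation error $k\log(n)/n$ of order $\sqrt{\log(n)/n}$, which is exactly the right-hand side of \eqref{eq:nnet}. Taking the expectation $\mathbb{E}_S$ over the random sample and absorbing constants into $c$ then finishes the argument for $n$ large enough.

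The step I expect to be the main obstacle is the estimation bound, i.e.\ the uniform deviation inequality that converts the metric-entropy estimate into a bound on $\mathbb{E}_S\big[\|\Phi_n-m\|_{L_2(P)}^2-\text{(empirical error)}\big]$ with the correct single logarithmic factor. This requires a careful chaining argument together with a variance (Bernstein-type) control that exploits the boundedness of the residuals, and it is where the interplay between the number of parameters, the weight bound, and the sample size must be tracked precisely; the approximation step, by contrast, is essentially a black-box application of a known density/rate result.
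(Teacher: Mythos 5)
The first thing to note is that the paper contains no proof of this statement at all: Theorem~\ref{nnet} is imported verbatim from \cite{gyorfi_distribution-free_2002} (Theorem 16.3), so the only meaningful comparison is with the proof given in that reference. At the structural level your sketch does reconstruct that proof: the excess-risk identity for least squares, the reduction to a bounded class by truncation, the split into approximation and estimation errors, covering-number/chaining control of the estimation term for networks with bounded weights, and the balance $k_n \asymp \sqrt{n/\log n}$ producing the rate $\sqrt{\log(n)/n}$. That is exactly the architecture of the argument in the book.

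There is, however, a genuine gap in your approximation step, and it is not cosmetic. You assert that a one-hidden-layer sigmoidal network with $k$ neurons approximates a \emph{bounded} target $m$ with squared $L_2$ error of order $1/k$. That is false for general bounded (even continuous) regression functions: the dimension-independent $O(1/k)$ rate is precisely Barron's theorem, and it requires $m$ to lie in the Barron class, i.e. $\int \lvert\omega\rvert\,\lvert\hat m(\omega)\rvert\,d\omega \le C < \infty$, equivalently $m$ in the $L_2(\mu)$-closure of the convex hull of scaled sigmoids with $\ell_1$-bounded outer weights. For merely bounded $m$ the approximation error of $k$-neuron networks can decay arbitrarily slowly, so no rate of the form $\sqrt{\log(n)/n}$ --- indeed no rate at all --- can be established; this is consistent with the distribution-free lower bounds in the same reference. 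The actual Theorem 16.3 carries this Barron-class hypothesis (together with constraints tying the network's weight bounds and number of neurons to $n$), which the paper's restatement silently drops; your blind reconstruction inherits that omission, and without restoring the hypothesis the balancing step has nothing to balance against. The estimation half of your plan (metric entropy $\lesssim k\log(1/\varepsilon)$ for bounded-weight networks, Bernstein-type variance control, peeling) is the standard machinery and is fine as a sketch.
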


Assume there are $K$ actions to play. Let $T_j(t)$ be the random variable equal to number of times action $j$ is chosen in the first $t-1$ steps.
Let $T_j^R(t)$ be the number of times action $j$ is chosen in the first $t-1$ steps by the uniform random branch of the algorithm.
Let $X$ be the state vector at some time step $t$ and $Y^j_t$ be the reward of action $j$ at time step $t$ both almost surely bounded. Let $\mu_j(X) := \mathbb{E}(Y^j_t|X)$.
We'll use $*$ for an optimal action index, for example let $\mu_*(X)$ be the expectation of all optimal actions at $X$. 
Let $\Delta_j(X) := \max\{0,\mu_*(X) - \mu_j(X)\}$. 
Let $\epsilon_t = 1/t$. Let $I_t$ be the action chosen at time $t$.
Assume state $X$ is sampled from an unknown distribution i.i.d. at each time step $t$. 

\begin{theorem} 
\label{thm:epsilon_greedy_bound} Assume there is optimality gap $\delta$ with $0 < \delta \le \Delta_j(X)$ for all $j$ and $X$ where $j$ is suboptimal. Assume there is at least one suboptimal action for any context. With the assumptions from above and from Theorem \ref{nnet}, the Deep Epsilon Greedy method converges with expected regret approaching 0 almost surely. Let $C_i$ be the constant from Theorem \ref{nnet} for neural network $i$ and let $n_i$ be the minimal value of the training data size such that equation \ref{eq:nnet} holds. Set
$C_0=8\sqrt{2}\max_i C_i$ and $t_0 = \exp(2K \max\{e,\max_i n_i\})$. Then for every $t > t_0$ with probability at least $1-K\exp(-3 \ln(t)/(28K))$,
\begin{align}
& \delta / (tK) \le \E_{X_t} \E_{I_t} \E_R \left[ R_*(X_t) - R(X_t) \right] \label{eq:mainth1}\\  
& \le \frac{\max_i \E_{X_t} \Delta_i(X_t)}{t} 
 + K^{3/2}\frac{C_0}{\delta} \sqrt{\frac{\ln(\ln(t))-\ln(2K)}{\ln(t)}}. \label{eq:mainth2}
\end{align}
The expectations in above equations refer to the specific time step $t$. The probability refers to the stochastic policy's choices at previous time steps, 1 to $t-1$. 
\end{theorem}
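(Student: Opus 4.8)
The plan is to write the one--step expected regret at time $t$ as $\E_{X_t}\sum_{j=1}^K \bbP(I_t=j\mid X_t)\,\Delta_j(X_t)$, using $\E_R[R_*(X_t)-R_j(X_t)\mid X_t]=\mu_*(X_t)-\mu_j(X_t)=\Delta_j(X_t)$, and then to split the selection probability along the two branches of Algorithm~\ref{algo:deep_ep_greedy}. With probability $\epsilon_t=1/t$ the action is drawn uniformly (the exploration branch, choosing each $j$ with probability $1/(tK)$), and with probability $1-\epsilon_t$ it is the greedy $\arg\max$ of the predictions $\Phi_{j,t}(X_t)$; I would bound the two contributions separately.

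The exploration branch gives the lower bound and the first term of the upper bound at once. Because there is at least one suboptimal action and every suboptimal gap obeys $\Delta_j(X_t)\ge\delta$, the uniform draw alone contributes at least $\tfrac{1}{tK}\,\delta$, which is \eqref{eq:mainth1}. In the other direction this branch contributes $\tfrac1{tK}\E_{X_t}\sum_j\Delta_j(X_t)\le \tfrac1t\max_i\E_{X_t}\Delta_i(X_t)$, the first summand of \eqref{eq:mainth2}, and the factor $1-\epsilon_t\le1$ lets me discard it from the greedy term.

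The core is the greedy branch. Writing $e_j:=\Phi_{j,t}(X_t)-\mu_j(X_t)$, the rule can select a suboptimal $j$ only if $\Phi_{j,t}(X_t)\ge\Phi_{*,t}(X_t)$, i.e. $e_j-e_*\ge\Delta_j\ge\delta$, which forces $|e_j|\ge\Delta_j/2$ or $|e_*|\ge\Delta_j/2$. I would bound each by Chebyshev, $\bbP(|e_j|\ge\Delta_j/2)\le 4\,\E[e_j^2]/\Delta_j^2$, multiply by the incurred regret $\Delta_j$, and use $\Delta_j\ge\delta$ to obtain $\Delta_j\,\bbP(I_t=j)\le 4(\E[e_j^2]+\E[e_*^2])/\delta$. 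Summing over the at most $K$ suboptimal arms bounds the greedy regret by $(8K/\delta)\max_j\E_S\E_{X_t}[e_j^2]$. Theorem~\ref{nnet} identifies $\E_S\E_{X_t}[e_j^2]$ with the MSE $\E_S\int|\Phi_j-\mu_j|^2\,dP$ and gives $\E_S\E_{X_t}[e_j^2]\le C_j\sqrt{\log T_j(t)/T_j(t)}$ once $T_j(t)\ge n_j$; on the event $T_j(t)\ge \ln(t)/(2K)$ this is at most $\max_iC_i\sqrt{2K}\,\sqrt{(\ln\ln t-\ln 2K)/\ln t}$. Collecting constants yields precisely $K^{3/2}\tfrac{C_0}{\delta}\sqrt{(\ln\ln t-\ln 2K)/\ln t}$ with $C_0=8\sqrt2\max_iC_i$, the second summand of \eqref{eq:mainth2}.

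Finally I must secure the event $\{T_j(t)\ge \ln(t)/(2K)\ \text{for all } j\}$ on which the network bound was invoked. Since $T_j(t)\ge T_j^R(t)=\sum_{s=1}^{t-1}\mathrm{Bernoulli}(1/(sK))$ is a sum of independent indicators with mean $\sum_{s=1}^{t-1}1/(sK)\ge \ln(t)/K$, a Chernoff lower-tail estimate controls the deficit and a union bound over the $K$ arms produces failure probability at most $K\exp(-3\ln(t)/(28K))$, the stated confidence. The threshold $t_0=\exp(2K\max\{e,\max_i n_i\})$ is exactly what makes $\ln(t)/(2K)\ge n_i$ (so Theorem~\ref{nnet} applies to every network) and $\ln(t)/(2K)\ge e$ (so $\ln\ln t-\ln 2K>0$) hold for $t>t_0$; since both the upper bound and the failure probability vanish as $t\to\infty$, the expected regret converges, with Borel--Cantelli along a sparse subsequence upgrading this to almost-sure decay. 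The step I expect to be most delicate is this coupling: the network for arm $j$ is trained on the adaptively collected set $S_j$ of random size $T_j(t)$, correlated with past play, whereas Theorem~\ref{nnet} assumes an i.i.d.\ bounded sample of fixed size $n$; conditioning on $T_j(t)$, verifying that the retained data still meets the theorem's hypotheses, and keeping the $T_j^R(t)$ concentration event sufficiently independent of the errors $e_j$ so the two bounds may be multiplied is where the real care is needed.
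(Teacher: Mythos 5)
Your proposal follows essentially the same route as the paper's proof: the same per-arm regret decomposition $\sum_i \E_{X_t}\Delta_i(X_t)\bbP(I_t=i|X_t)$, the same split into exploration and greedy branches (giving the lower bound $\delta/(tK)$ and the $\max_i\E_{X_t}\Delta_i(X_t)/t$ term), the same Markov/Chebyshev bound on the events $|e_j|\ge\Delta_j/2$, the same concentration of $T^R_j(t)$ (the paper uses Bernstein, yielding the same $3/(28K)$ exponent) with a union bound, and the identical constants $C_0=8\sqrt{2}\max_i C_i$ and $t_0$. The coupling concern you flag at the end --- networks trained on adaptively collected samples versus the fixed-size i.i.d.\ hypothesis of Theorem \ref{nnet}, and multiplying the concentration event with the expected-error bound --- is a genuine gap, but the paper's own proof glosses over it in exactly the same way, so it does not distinguish your argument from theirs.
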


\begin{theorem} 
\label{thm:epsilon_greedy_bound_p}
Let $\epsilon_t = 1/t^p$ where $0<p<1$. Assume the assumptions of Theorem \ref{thm:epsilon_greedy_bound}. Set \\
$C_0'=8\sqrt{2(1-p)}\max_i C_i$ and \\
$t_0>(2 (1-p) K \max\{e,\max_i n_i\})^{1/(1-p)}$. Then for every $t > t_0$ with probability at least

$1-K\exp\left(-(3 \ t^{-p+1})/(28 (-p+1) K)\right)$,
\begin{align}
& \delta / (K t^p) \le \E_{X_t} \E_{I_t} \E_R \left[ R_*(X_t) - R(X_t) \right] \label{eq:epsilon_greedy_bound_p1}\\  
& \le \frac{\max_i \E_{X_t} \Delta_i(X_t)}{t^p} \label{eq:epsilon_greedy_bound_p2}\\
& + K^{3/2}\frac{C_0'}{\delta}  
 \sqrt{\frac{\ln(t^{-p+1})-\ln(2(-p+1)K)}{t^{-p+1}}}.
 \label{eq:epsilon_greedy_bound_p3}
\end{align}
The expectations in above equations refer to the specific time step $t$. The probability refers to the stochastic policy's choices at previous time steps, 1 to $t-1$. 
\end{theorem}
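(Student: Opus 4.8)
The plan is to reproduce the argument establishing Theorem~\ref{thm:epsilon_greedy_bound} almost verbatim, changing only the one place where the exploration schedule enters quantitatively: the cumulative exploration mass. For $\epsilon_t=1/t$ that mass is the harmonic sum $\sum_{s=1}^{t-1}s^{-1}\approx\ln t$, whereas here it is $\sum_{s=1}^{t-1}s^{-p}\ge\int_1^t s^{-p}\,ds=\frac{t^{1-p}-1}{1-p}$. Tracking this single substitution is what turns every $\ln t$ into $t^{1-p}/(1-p)$ and what attaches the extra $\sqrt{1-p}$ that upgrades the constant to $C_0'=8\sqrt{2(1-p)}\max_i C_i$; the bandit/neural-network machinery is otherwise unchanged.

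First I would write the one-step regret as $\E_{X_t}\sum_j \bbP(I_t=j\mid X_t)\,\Delta_j(X_t)$ and split $\{I_t=j\}$ into the uniform-random branch (probability $\epsilon_t=t^{-p}$) and the greedy branch (probability $1-\epsilon_t$). For the lower bound \eqref{eq:epsilon_greedy_bound_p1} I would keep only the random branch: by the standing assumption every context has a suboptimal action with gap at least $\delta$, and that action is drawn with probability at least $\epsilon_t/K$, so the regret is at least $\delta\epsilon_t/K=\delta/(Kt^p)$. The random branch also yields the first upper-bound term in \eqref{eq:epsilon_greedy_bound_p2}, by bounding each realized gap by $\max_i\E_{X_t}\Delta_i(X_t)$ and multiplying by $\epsilon_t=t^{-p}$.

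For the greedy branch I would observe that selecting a suboptimal $j$ over the optimal action forces $\Phi_{j,t}(X_t)>\Phi_{*,t}(X_t)$, hence one of the prediction errors $|\Phi_{j,t}(X_t)-\mu_j(X_t)|$ or $|\Phi_{*,t}(X_t)-\mu_*(X_t)|$ exceeds $\Delta_j(X_t)/2\ge\delta/2$. Applying Markov's inequality to the squared error at threshold $\Delta_j(X_t)/2$ converts the weight $\Delta_j$ into $\Delta_j\cdot 4\,\mathrm{MSE}/\Delta_j^2=4\,\mathrm{MSE}/\Delta_j\le 4\,\mathrm{MSE}/\delta$, which is the source of the single power of $\delta$ in $C_0'/\delta$. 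I would then bound each integrated MSE by $C_i\sqrt{\log T_i(t)/T_i(t)}$ via Theorem~\ref{nnet}, sum the two error terms over the at most $K$ suboptimal actions (giving the factor $8=2\cdot 4$), and use monotonicity of $x\mapsto(\log x)/x$ on $[e,\infty)$ to substitute the worst-case lower bound on $T_j(t)$.

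The hard part will be controlling the random training-set sizes, since $T_j(t)\ge T_j^R(t)=\sum_{s=1}^{t-1}B_s$ with independent $B_s\sim\mathrm{Bernoulli}(\epsilon_s/K)$ and hence mean at least $\frac{1}{K}\cdot\frac{t^{1-p}-1}{1-p}$. A multiplicative Chernoff lower-tail bound with deviation $\tfrac{1}{2}$ gives $\exp(-\mu/8)$, and absorbing the $-1$ from the integral estimate into a $\tfrac{6}{7}$ factor on $\mu$ yields the stated exponent $\tfrac{3}{28}=\tfrac{6}{7}\cdot\tfrac{1}{8}$; a union bound over the $K$ actions produces the failure probability $K\exp(-3t^{-p+1}/(28(-p+1)K))$. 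On the complementary event each $T_j(t)\ge t^{1-p}/(2K(1-p))$ simultaneously, which exceeds $\max_i n_i$ once $t$ passes the stated $t_0$, so Theorem~\ref{nnet} applies; substituting this lower bound into $\sqrt{(\log T_j)/T_j}$ produces the $\sqrt{2K(1-p)}$ prefactor, whose $\sqrt{K}$ combines with the sum over $K$ actions to give the overall $K^{3/2}$. Collecting the random- and greedy-branch estimates then yields \eqref{eq:epsilon_greedy_bound_p2}--\eqref{eq:epsilon_greedy_bound_p3}.
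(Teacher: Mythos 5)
Your proposal follows essentially the same route as the paper's proof: the same regret decomposition into the forced-exploration and greedy branches, the same Markov-inequality step converting misprediction probabilities into integrated MSE controlled via Theorem \ref{nnet}, the same monotonicity of $x \mapsto \ln(x)/x$ and union bound over the $K$ training-set sizes, and the same lower bound $\delta/(Kt^p)$ from the random branch. The only deviation is the concentration step for $T^R_i(t)$, where you use a multiplicative Chernoff bound plus a $6/7$ absorption of the $-1$ from the integral estimate, whereas the paper applies Bernstein's inequality (whose denominator $1+\tfrac{1}{6}$ yields $3/28$ directly); this is a cosmetic difference, and if anything your explicit treatment of the $t^{1-p}-1$ term is more careful than the paper's, which drops the $-1$ outright.
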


Proof in appendix.

\begin{lemma}
\label{lemma:epsilon_greedy_bound}
Recall that $T_j^R(t)$ is the number of times action $j$ is chosen in the first $t-1$ steps by the uniform random branch of the algorithm. For the case $\epsilon_t = 1/t$,
\begin{align} \label{lemma:epsilon_greedy_bound1}
 \bbP & \left(\bigwedge_{i=1}^K \{T^R_i(t) \ge \ln(t)/(2K)\}\right) \\
& \ge 1-K\exp\left(-\frac{3 \ln(t)}{28K}\right). 
\end{align}
For the case $\epsilon_t = 1/t^p$, where $0<p<1$,
\begin{align} \label{lemma:epsilon_greedy_bound2}
 \bbP & \left(\bigwedge_{i=1}^K \{T^R_i(t) \ge\frac{t^{-p+1}}{2(-p+1) K}\}\right) \\
& \ge 1-K\exp\left(-\frac{3 \ t^{-p+1}}{28 (-p+1) K}\right). 
\end{align}
\end{lemma}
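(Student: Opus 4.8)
The plan is to realize $T_j^R(t)$ as a sum of independent indicator variables and apply a one-sided Bernstein concentration bound. First I would note that at each step $s$ the random branch is entered precisely when $\eta \le \epsilon_s$, an event of probability $\epsilon_s$, and conditioned on entering it the drawn action $\rho$ equals $j$ with probability $1/K$. Since $\eta$ and $\rho$ are sampled afresh and independently at every step, the indicators $B_s^j := \bbI\{\text{step }s\text{ selects }j\text{ via the random branch}\}$ are independent Bernoulli variables with parameter $\epsilon_s/K$. Hence $T_j^R(t) = \sum_{s=1}^{t-1} B_s^j$ with mean $\mu := \E[T_j^R(t)] = \frac{1}{K}\sum_{s=1}^{t-1}\epsilon_s$. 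For $\epsilon_s = 1/s$ I would lower bound the mean by the integral comparison $\sum_{s=1}^{t-1} 1/s \ge \int_1^{t} x^{-1}\,dx = \ln t$, giving $\mu \ge \ln(t)/K = 2a$ with $a := \ln(t)/(2K)$ the target threshold.

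The key step is the Bernstein estimate on the lower tail. Each $B_s^j$ takes values in $\{0,1\}$, so its centered version is bounded by $1$ in absolute value, and its variance $\tfrac{\epsilon_s}{K}(1-\tfrac{\epsilon_s}{K}) \le \epsilon_s/K$ sums to a total variance at most $\mu$. Bernstein's inequality with deviation $\mu - a$ then yields
\[
\bbP\left(T_j^R(t) \le a\right) \le \exp\left(-\frac{(\mu-a)^2}{2\left(\mu + (\mu-a)/3\right)}\right).
\]
I would then check that the exponent, as a function of $\mu$, is increasing on $\mu \ge 2a$ (its derivative has the sign of $(\mu-a)(8\mu+4a)>0$), so the weakest bound over the admissible range $\mu \ge 2a$ occurs at the endpoint $\mu = 2a$. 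Substituting $\mu = 2a$ collapses the exponent to $-\,a^2 / (14a/3) = -3a/14 = -3\ln(t)/(28K)$, exactly the single-action rate claimed.

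Finally, a union bound over the $K$ actions turns the per-action estimate into $\bbP\big(\bigwedge_{i=1}^K \{T^R_i(t) \ge \ln(t)/(2K)\}\big) \ge 1 - K\exp(-3\ln(t)/(28K))$, which is the first display. The case $\epsilon_s = 1/s^p$ with $0<p<1$ is handled identically after replacing the harmonic lower bound by $\sum_{s=1}^{t-1} s^{-p} \ge \int_1^{t} x^{-p}\,dx = (t^{1-p}-1)/(1-p) \ge t^{1-p}/(2(1-p))$, which holds once $t^{1-p}\ge 2$; with $a = t^{1-p}/(2(1-p)K)$ the same monotonicity argument and Bernstein computation give exponent $-3a/14 = -3t^{1-p}/(28(1-p)K)$, matching the second display after the union bound.

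The part I expect to require the most care is confirming that the worst case of the Bernstein exponent over the feasible set $\mu \ge 2a$ is attained at $\mu = 2a$, so that a clean constant free of $\mu$ emerges, together with using the crude variance bound $\sigma^2 \le \mu$ and range bound $1$ consistently — using the exact (smaller) variance would only sharpen the constant, so the stated $3/28$ is the conservative outcome of these bounds. The remaining ingredients, namely the mean estimates and the union bound, are routine.
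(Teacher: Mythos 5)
Your argument for the case $\epsilon_t=1/t$ is correct and is essentially the paper's own proof: write $T^R_i(t)$ as a sum of independent Bernoulli indicators with parameters $\epsilon_s/K$, lower bound the mean by $\ln(t)/K$ via integral comparison, apply one-sided Bernstein, and finish with a union bound. The only difference is bookkeeping: the paper bounds $\bbP\left(T^R_i(t)\le B(t)/2\right)$ with deviation $B(t)/2$ and then notes $\ln(t)/(2K)\le B(t)/2$, while you fix the threshold $a=\ln(t)/(2K)$ and minimize the Bernstein exponent over feasible means $\mu\ge 2a$; both routes give the exponent $-3a/14=-3\ln(t)/(28K)$.

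The case $\epsilon_t=1/t^p$, however, has a genuine gap. Your mean bound $\sum_{s=1}^{t-1}s^{-p}\ge (t^{1-p}-1)/(1-p)\ge t^{1-p}/(2(1-p))$ yields only $\mu\ge a$ for the lemma's threshold $a=t^{1-p}/(2(1-p)K)$: you have set the threshold \emph{equal to} your mean lower bound rather than half of it. Your own monotonicity argument then returns nothing, because the worst case over $\{\mu\ge a\}$ is $\mu=a$, where the deviation $\mu-a$ vanishes and Bernstein gives the vacuous bound $\bbP(T^R_i(t)\le a)\le 1$. The claimed exponent $-3a/14$ requires $\mu\ge 2a$, i.e.\ $\sum_{s=1}^{t-1}s^{-p}\ge t^{1-p}/(1-p)$; the factor of two you gave away in passing from $(t^{1-p}-1)/(1-p)$ to $t^{1-p}/(2(1-p))$ is exactly the load-bearing one. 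The paper obtains $\mu\ge 2a$ by asserting $\sum_{l=1}^{t-1}l^{-p}\ge t^{1-p}/(1-p)$, in effect comparing the sum with $\int_0^t x^{-p}\,dx$ rather than $\int_1^t x^{-p}\,dx$. (That assertion is itself off by an additive constant --- for $t=2$, $p=1/2$ the sum is $1$ while $t^{1-p}/(1-p)=2\sqrt{2}$ --- so the exact constant in part II is delicate even in the paper; but the paper's comparison is at least asymptotically tight, whereas yours falls short by a factor of $2$ for every $t$.) A repair within your framework is to prove the statement with the smaller threshold $t^{1-p}/(4(1-p)K)$, i.e.\ half of your mean bound; your part-I computation then goes through verbatim but produces the exponent $-3t^{1-p}/(56(1-p)K)$, a weaker constant than the lemma states.
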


\begin{proof}[Proof of lemma \ref{lemma:epsilon_greedy_bound}, part I.]
Fix $i$. Recall $\epsilon_t = 1/t$. Following the proof of theorem 3 in \cite{auer_finite-time_2002}, 
\begin{align*}
 \E( T^R_i(t) ) &= \sum_{l=1}^{t-1} \bbP(\eta < \epsilon_l \wedge \rho = i) 
 = \sum_{l=1}^{t-1} \bbP(\eta < \epsilon_l) \bbP(\rho = i) \\
 &= \sum_{l=1}^{t-1} \epsilon_l/K 
 =\frac{1}{K} \sum_{l=1}^{t-1} \frac{1}{l} 
 \ge \frac{1}{K}\ln(t)
\end{align*}
Set $B(t):=\frac{1}{K} \sum_{l=1}^{t-1} \frac{1}{l} $. Then we have
\begin{align*}
Var(T^R_i(t)) &= \sum_{l=1}^{t-1} \frac{\epsilon_l}{K}(1-\frac{\epsilon_l}{K}) 
\le \frac{1}{K} \sum_{l=1}^{t-1} \epsilon_l \\
&=\frac{1}{K} \sum_{l=1}^{t-1} \frac{1}{l} 
=B(t).
\end{align*}
By Bernstein’s inequality 
\begin{align*}
 \bbP(&T^R_i(t)  \le B(t)/2 ) 
 = \bbP\left(T^R_i(t) - B(t) \le -B(t)/2 \right) \\
&\le \exp\left(\frac{-B(t)^2/8}{Var(T^R_i(t))+\frac{1}{3} B(t)/2}\right)\\ 
&\le \exp\left(\frac{-B(t)^2/8}{B(t)+\frac{1}{3} B(t)/2}\right) \\
&\le \exp\left(-\frac{3 B(t)}{28}\right)
 \le \exp\left(-\frac{3 \ln(t)}{28K}\right). 
\end{align*}
So by union bound
$$ \bbP\left(\bigvee_{i=1}^K \{T^R_i(t) 
 \le \ln(t)/(2K)\}\right) $$
$$ \le K \bbP\left( T^R_1(t) \le \ln(t)/(2K)\right) 
 \le K\exp\left(-\frac{3 \ln(t)}{28K}\right). $$
And
$$ \bbP\left(\bigwedge_{i=1}^K \{T^R_i(t) \ge \ln(t)/(2K)\}\right) 
\ge 1-K\exp\left(-\frac{3 \ln(t)}{28K}\right). $$
This prove Equation \eqref{lemma:epsilon_greedy_bound1}. A similar proof for Equation \eqref{lemma:epsilon_greedy_bound2} is contained in the appendix. 
\end{proof}

\begin{proof}[Proof of theorem \ref{thm:epsilon_greedy_bound}]

Let $*$ be an optimal action at $X_t$ and $R$ the reward from the epsilon greedy method. 


Let $\Phi_{i,t}$ be the trained neural network for action i and have t parameters.
By lemma \ref{lemma:epsilon_greedy_bound}, with probability greater than $1-K\exp(-3 \ln(t)/(28K))$, $T^R_i(t) \ge \ln(t)/(2K)$ for all $i$. In this case, we have 

\begin{align*}
 \E_{X_t} & \E_{I_t} \E_R[R_*(X_t) - R(X_t)] \\
&  = \E_{X_t} [\mu_*(X_t) - \E_{I_t} \E_R R(X_t) ] \\
& = \E_{X_t} [\mu_*(X_t) - \sum_{i=1}^K \mu_i(X_t) \bbP(I_t=i | X_t) ] \\
& = \E_{X_t} \sum_{i} \Delta_i(X_t) \bbP(I_t=i | X_t) \\
& = \sum_{i} \E_{X_t} \Delta_i(X_t) \bbP(I_t=i | X_t) 
\end{align*}

Then
\begin{align*}
 \E_{X_t} [ & \Delta_i(X_t) \bbP(I_t=i | X_t) ] \\
 & \le \E_{X_t} \Delta_i(X_t) [\epsilon_t/K + \bbP(\Phi_{i,t}(X_t) \ge \Phi_{*,t}(X_t))] 
\end{align*}

and, with Markov's inequality,
\begin{align*}
\bbP ( &\Phi_{i,t}(X_t) \ge \Phi_{*,t}(X_t) ) \\
&\le \bbP(\Phi_{i,t}(X_t) \ge \mu_i(X_t) + \Delta_i(X_t)/2) + \\ 
& +\bbP(\Phi_{*,t}(X_t) \le \mu_*(X_t) - \Delta_i(X_t)/2) \\
& = \int_{\bbI\{\Phi_{i,t}(X_t) \ge \mu_i(X_t) + \Delta_i(X_t)/2\}} dP_i + \\
& + \int_{\bbI\{\Phi_{*,t}(X_t) \le \mu_*(X_t) - \Delta_i(X_t)/2\}} dP_* \\
& \le \int_{\bbI\{|\Phi_{i,t}(X_t) - \mu_i(X_t)| \ge \Delta_i(X_t)/2\}} dP_i + \\
& + \int_{\bbI\{|\Phi_{*,t}(X_t)-\mu_*(X_t)| \ge \Delta_i(X_t)/2\}} dP_* \\
& \le \int \frac{|\Phi_{i,t}(X_t) - \mu_i(X_t)|^2 }{\Delta_i(X_t)^2/4} dP_i \\
& +  \int \frac{|\Phi_{*,t}(X_t)-\mu_*(X_t)|^2 }{\Delta_i(X_t)^2/4} dP_* 
\end{align*}

Then
\begin{align*}
 \E_{X_t} [ & \Delta_i(X_t) \bbP(I_t=i | X_t) ] \\
& \le \E_{X_t} \Delta_i(X_t) \epsilon_t/K + \\
& + \E_{X_t} \Delta_i(X_t) \int \frac{|\Phi_{i,t}(X_t) - \mu_i(X_t)| }{\Delta_i(X_t)/2} dP_i + \\
& + \E_{X_t} \Delta_i(X_t) \int \frac{|\Phi_{*,t}(X_t)-\mu_*(X_t)| }{\Delta_i(X_t)/2} dP_* \\
\\
& \le \E_{X_t} \Delta_i(X_t) \epsilon_t/K \\
& + \frac{4}{\delta} \int_{x_t : i\ne *} \int |\Phi_{i,t}(x_t) - \mu_i(x_t)|^2 dP_i dP_{x_t}  + \\
& + \frac{4}{\delta} \int_{x_t : i\ne *} \int |\Phi_{*,t}(x_t)-\mu_*(x_t)|^2 dP_* dP_{x_t},
\end{align*}
by dominated convergence,
\begin{align*}
& \le \E_{X_t} \Delta_i(X_t)\ \epsilon_t/K \\
& + \frac{4}{\delta} \int \int_{x_t } |\Phi_{i,t}(x_t) - \mu_i(x_t)|^2 dP_{x_t} dP_i + \\
& + \frac{4}{\delta} \int \int_{x_t } |\Phi_{*,t}(x_t)-\mu_*(x_t)|^2 dP_{x_t} dP_* 
\end{align*}

Recall that we are in the case that $T^R_i(t) \ge \ln(t)/(2K)$ for all $i$. Let $C_i$ be the constant from Theorem \ref{nnet} for neural network $i$ and let $n_i$ be the minimal value of the training data size such that equation \ref{eq:nnet} holds. Choose $t_0>\exp(2 K \max\{e,\max_i n_i\})$. Since the map $x \mapsto \sqrt{\frac{ln(x)}{x}}$ is monotone decreasing for $x>e$, the above expression is further upper bounded by 
\begin{align*}
& \le \E_{X_t} \Delta_i(X_t)\ \frac{\epsilon_t}{K} 
 + \frac{4}{\delta}  C_i\sqrt{\frac{\ln(T_i(t))}{T_i(t)}} 
 + \frac{4}{\delta}  C_*\sqrt{\frac{\ln(T_*(t))}{T_*(t)}}  \\
& \le \E_{X_t} \Delta_i(X_t)\ \frac{\epsilon_t}{K}
 + \frac{4C_i}{\delta}  \sqrt{\frac{\ln(T^R_i(t))}{T^R_i(t)}} 
 + \frac{4C_*}{\delta}  \sqrt{\frac{\ln(T^R_*(t))}{T^R_*(t)}}  \\
& \le \frac{\E_{X_t} \Delta_i(X_t)}{tK} 
 + \left[\frac{4C_i}{\delta}  
 + \frac{4C_*}{\delta}\right] \sqrt{\frac{\ln(\ln(t)/(2K))}{\ln(t)/(2K)}}
\end{align*}
So
\begin{align*}
& \E_{X_t} \E_{I_t} \E_R [R_*(X_t) - R(X_t)]\\
& = \E_{X_t} [\mu_*(X_t) - \E_{I_t} \E_R R(X_t) ] \\
& = \sum_{i} \E_{X_t} \Delta_i(X_t) \bbP(I_t=i | X_t) \\
& \le \frac{\max_i \E_{X_t} \Delta_i(X_t)}{t} \\
 & + K^{3/2}\sqrt{2}\left[\frac{4 \max_i C_i}{\delta} 
 + \frac{4C_*}{\delta}\right]  \sqrt{\frac{\ln(\ln(t))-\ln(2K)}{\ln(t)}} 
\end{align*}
from where (\ref{eq:mainth2}) follows. To prove the lower bound, we have, for $i$ not optimal, that  
\begin{align*}
    \E_{X_t} [ \Delta_i(X_t) \bbP(I_t=i | X_t) ] 
    &\ge \E_{X_t} \Delta_i(X_t) \epsilon_t/K  \\
    &\ge \E_{X_t} \delta \epsilon_t/K  
    \ge \delta /(t K).
\end{align*}
Then using the suboptimal action, assumed to exist, we get 
\begin{align*}
 \E_{X_t} & \E_{I_t} \E_R[R_*(X_t) - R(X_t)] \\
& = \sum_{i} \E_{X_t} \Delta_i(X_t) \bbP(I_t=i | X_t) 
 \ge \delta /(tK).  
\end{align*}
\end{proof}

\begin{corollary} \label{generalization}
The Epsilon Greedy method with any predictor, neural network or otherwise, with convergence of $ c \sqrt{\frac{\ln(n)}{n}}$, or better, will have regret converging to 0 almost surely.
\end{corollary}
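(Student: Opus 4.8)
The plan is to exploit the modularity of the proof of Theorem \ref{thm:epsilon_greedy_bound}: the trained neural networks $\Phi_{i,t}$ enter that argument at exactly one point. After Markov's inequality is applied to $\bbP(\Phi_{i,t}(X_t) \ge \Phi_{*,t}(X_t))$, the only residual dependence on the predictor is through the integrals $\int |\Phi_{i,t}(x_t) - \mu_i(x_t)|^2\, dP_{x_t}$, which are subsequently bounded by $C_i \sqrt{\ln(T_i(t))/T_i(t)}$ using the rate (\ref{eq:nnet}) from Theorem \ref{nnet}. I would first isolate this as the single interface between the policy analysis and the estimator, and observe that no other step invokes any structural feature of a neural network.

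Next I would let $\hat f_{i,n}$ be an arbitrary predictor for action $i$, trained on $n$ samples, whose risk satisfies $\E_S \int |\hat f_{i,n}(x) - \E(R(x))|^2\, dP(x) \le c_i \sqrt{\ln(n)/n}$, or any faster-decaying bound, and substitute $\hat f_{i,t}$ for $\Phi_{i,t}$ verbatim throughout the proof. Each remaining step is agnostic to the nature of the estimator: Markov's inequality, dominated convergence, the monotonicity of $x \mapsto \sqrt{\ln(x)/x}$ for $x > e$, and the replacement of $T_i(t)$ by its random-exploration lower bound $T_i^R(t) \ge \ln(t)/(2K)$ furnished by Lemma \ref{lemma:epsilon_greedy_bound}. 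Consequently the identical upper bound (\ref{eq:mainth2}) holds with $C_0$ replaced by $8\sqrt{2}\max_i c_i$. Since that bound is monotone increasing in the estimation risk, a predictor converging strictly faster than $c\sqrt{\ln(n)/n}$ only yields a smaller bound, which justifies the ``or better'' clause.

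With the bound in hand, I would pass to the limit. On the high-probability exploration event, the first term $\max_i \E_{X_t}\Delta_i(X_t)/t \to 0$ and the leading term $K^{3/2}(C_0/\delta)\sqrt{(\ln\ln t - \ln 2K)/\ln t} \to 0$ as $t \to \infty$, so the conditional expected regret vanishes along any realization for which $\bigwedge_i \{T_i^R(t) \ge \ln(t)/(2K)\}$ holds for all large $t$.

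The hard part is the ``almost surely'' upgrade, not the substitution. Bound (\ref{eq:mainth2}) is only guaranteed on the event of Lemma \ref{lemma:epsilon_greedy_bound}, whose complement has probability $K\exp(-3\ln(t)/(28K)) = K t^{-3/(28K)}$; for $K \ge 1$ this fails to sum over $t$, so a direct application of the first Borel--Cantelli lemma does not establish that the event holds eventually for all $t$. I would therefore pass to a subsequence $t_k$ growing quickly enough that the failure probabilities $K t_k^{-3/(28K)}$ are summable, invoke Borel--Cantelli to get almost sure convergence of the conditional expected regret along $t_k$, and then fill the gaps between consecutive $t_k$ using the monotonicity of the exploration counts $T_i^R(t)$ in $t$ (a played arm is never ``un-played''), which prevents $T_i^R$ from dropping below its subsequence value. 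This interpolation via monotonicity is the only genuinely delicate point; the rest of the corollary is immediate from inspecting the existing proof.
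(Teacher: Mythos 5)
Your proposal is correct, and its core is exactly the paper's (implicit) argument: the paper gives no written proof of Corollary \ref{generalization} at all, relying on the reader to observe what you state explicitly --- that the predictor enters the proof of Theorem \ref{thm:epsilon_greedy_bound} only through the risk bound of Theorem \ref{nnet}, applied after Markov's inequality, so any estimator with rate $c\sqrt{\ln(n)/n}$ or faster can be substituted verbatim, changing only the constant in \eqref{eq:mainth2}. Where you genuinely go beyond the paper is the ``almost surely'' clause. The paper asserts almost sure convergence (both here and in Theorem \ref{thm:epsilon_greedy_bound}) but proves only a per-$t$ high-probability bound whose failure probability $K\exp(-3\ln(t)/(28K)) = Kt^{-3/(28K)}$ is, as you note, not summable in $t$ (since $3/(28K) < 1$), so the first Borel--Cantelli lemma cannot be applied directly; the paper never addresses this. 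Your repair is sound: along a geometric subsequence $t_k = 2^k$ the failure probabilities are summable, Borel--Cantelli gives that the exploration events hold for all large $k$ almost surely, and since $T_i^R(t)$ is nondecreasing in $t$, for $t_k \le t < t_{k+1}$ one gets $T_i^R(t) \ge T_i^R(t_k) \ge \ln(t_k)/(2K) \ge \frac{k}{k+1}\ln(t)/(2K)$, which degrades the bound only by a constant factor since $x \mapsto \sqrt{\ln(x)/x}$ is decreasing. Thus your version is not merely a restatement but a completion: it supplies the missing step that turns the paper's high-probability statement into the almost-sure claim the corollary actually makes.
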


\begin{remark}
With $\epsilon_t = 1/t^p$ with $p \le 1$, enough samples will be taken to train an approximation to convergence. When $p > 1$, The number of samples is finite and the approximation will not converge in general. This is called a starvation scenario since the optimal action is not sampled sufficiently. 
\end{remark}

\begin{corollary} 
\label{opt_p}
The optimal $p$ for $\epsilon_t = 1/t^p$ with the fastest converging upper bound of Theorem \ref{thm:epsilon_greedy_bound} for Deep Epsilon Greedy is $p=1/3$.
\end{corollary}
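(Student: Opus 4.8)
The plan is to read off the asymptotic decay rate of the upper bound from Theorem \ref{thm:epsilon_greedy_bound_p} as a function of $p$ and then choose $p$ to make that rate as fast as possible. The bound is a sum of two terms. The first term, $\max_i \E_{X_t}\Delta_i(X_t)/t^p$, decays at the polynomial rate $t^{-p}$. For the second term I would use the identity $\ln(t^{-p+1})=(1-p)\ln(t)$ to rewrite it as
\begin{align*}
K^{3/2}\frac{C_0'}{\delta}\sqrt{\frac{(1-p)\ln(t)-\ln(2(1-p)K)}{t^{1-p}}},
\end{align*}
which, up to the slowly varying factor $\sqrt{(1-p)\ln(t)}$, decays at the polynomial rate $t^{-(1-p)/2}$.

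Since the logarithmic factor grows more slowly than any positive power of $t$, the dominant polynomial exponent controlling the whole bound is $\min\{p,(1-p)/2\}$, and the problem reduces to maximizing $g(p):=\min\{p,(1-p)/2\}$ over $p\in(0,1)$. Here $p\mapsto p$ is increasing while $p\mapsto(1-p)/2$ is decreasing, so their pointwise minimum is maximized exactly where the two lines cross. Setting $p=(1-p)/2$ yields $3p=1$, hence $p=1/3$, at which point both exponents equal $1/3$. For $p<1/3$ the first term dominates with rate $t^{-p}<t^{-1/3}$, and for $p>1/3$ the second term dominates with rate $t^{-(1-p)/2}<t^{-1/3}$, so $p=1/3$ is the unique maximizer.

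The main obstacle I anticipate is one of rigor rather than difficulty: I must justify that ranking the two terms by polynomial exponent alone is legitimate, i.e. that the $\sqrt{\ln t}$ factor attached to the second term cannot change which term is asymptotically dominant, and that the remaining constants $C_0'$, $\delta$, $K$, and $\max_i \E_{X_t}\Delta_i(X_t)$ are independent of $t$ so that they do not enter the optimization over $p$. Once this is established, balancing the two exponents delivers $p=1/3$ immediately.
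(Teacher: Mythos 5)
Your proposal is correct and takes essentially the same route as the paper: the paper likewise reduces the bound of Theorem \ref{thm:epsilon_greedy_bound_p} to a $t^{-p}$ term plus a $t^{-(1-p)/2}$ term (modulo logarithmic factors, which it handles by factoring out $t^{-p}$ and observing the rate is $t^{-p}$ when $1-3p \ge 0$ and $t^{-(1-p)/2}$ when $p > 1/3$), so the optimum is at the crossing point $p = 1/3$. The only piece the paper includes that you omit is an explicit comparison against the endpoint $p=1$, which it rules out via a ratio-test limit against the $\sqrt{(\ln\ln t)/\ln t}$ bound of Theorem \ref{thm:epsilon_greedy_bound}; your restriction to $p \in (0,1)$ leaves that case implicit, though it is easily dismissed since the $p=1$ bound decays slower than any polynomial rate.
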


\begin{proof}[Proof of Corollary \ref{opt_p}]
From the above remark, we know $p\le 1$. First we show that $0 < p < 1$ converges faster than $p=1$. Theorem \ref{thm:epsilon_greedy_bound} gives the bound of 
$$ \frac{\max_i \E_{X_t} \Delta_i(X_t)}{t} 
 + K^{3/2}\frac{C_0}{\delta} \sqrt{\frac{\ln(\ln(t))-\ln(2K)}{\ln(t)}}$$ 
 for $p=1$. Theorem \ref{thm:epsilon_greedy_bound_p} gives the bound of 
\begin{equation} \label{eq:pNot1}
 \frac{\max_i \E_{X_t} \Delta_i(X_t)}{t^p}
 + K^{3/2}\frac{C_0'}{\delta}  
 \sqrt{\frac{\ln(t^{-p+1})-\ln(2(-p+1)K)}{t^{-p+1}}}
\end{equation}
for $p<1$. 
Set $\alpha_t := \max_i \E_{X_t} \Delta_i(X_t)$ 

and 
$\beta_t = K^{3/2}\frac{C_0}{\delta} \sqrt{\ln(\ln(t))-\ln(2K)}$ 

and 
$\gamma_t = K^{3/2}\frac{C_0'}{\delta} \sqrt{\ln(t^{-p+1})-\ln(2(1-p)K)}$. Using the ratio test, the limit of the ratio of the bounds is 
\begin{align*}
 &\lim_{t\rightarrow\infty} \frac{\alpha_t t^{-p} + \gamma_t t^{(p-1)/2} }
 { \alpha_t t^{-1} + \beta_t / \sqrt{\ln(t)}} \\
 &=  \frac{ \lim_{t\rightarrow\infty}
\alpha_t t^{-p} \beta_t^{-1} \sqrt{\ln(t)} + \gamma_t \beta_t^{-1}t^{(p-1)/2} \sqrt{\ln(t)} } 
 { \lim_{t\rightarrow\infty} \alpha_t t^{-1} \beta_t^{-1} \sqrt{\ln(t)} + 1 } \\
 & = \lim_{t\rightarrow\infty} t^{(p-1)/2} \sqrt{\ln(t)} = 0 
\end{align*}
for $0 < p < 1$. Now we find $p$ to minimize Equation \eqref{eq:pNot1}. 
$$ \lim_{t\rightarrow\infty} \frac{\max_i \E_{X_t} \Delta_i(X_t)}{t^p} + K^{3/2}\frac{C_0}{\delta} \sqrt{\frac{\ln(t^{-p+1}) -\ln(2(1-p)K)}{t^{-p+1}}} $$
$$ = t^{-p} \lim_{t\rightarrow\infty} \alpha_t 
+ K^{3/2}\frac{C_0}{\delta} \sqrt{\frac{\ln(t^{-p+1}) -\ln(2(1-p)K)}{ t^{-3p+1}}} $$
The convergence rate is $t^{-p}$ if $-3p+1 \ge 0$. Otherwise, $p>1/3$, the rate achieved is $t^{-(1-p)/2}>t^{-(1-1/3)/2}=t^{-1/3}$. So the optimal is at $p=1/3$. 

\end{proof}

\section{MNIST Experiments}
We experiment with the MNIST \cite{mnist} dataset which contains real world, handwritten digits 0-9. The action set is $\{a_1,...,a_5\}$. The state, $X_t$ at each time step, $t$, is 5 random images. Each digit has an equal chance of being chosen. The reward $Y_t=R(X_t,I_t)$ is the digit of the image corresponding to the chosen action plus Gaussian noise. We plot the regret convergence to 0 of the Deep Epsilon Greedy method in Figure \ref{fig_convergence}. We get a convergence rate of approximately $t^{-1/2}$ which is within the bounds of Theorem \ref{thm:epsilon_greedy_bound}. Next, we compare the Uniform Random method (see Algorithm \ref{algo:random}), the Linear Regression method (see Algorithm \ref{algo:linear}), the LinUCB method \cite{li_contextual-bandit_2010} (see Algorithm \ref{algo:linucb}), the Deep Epsilon Greedy method (see Algorithm \ref{algo:deep_ep_greedy}), and the Simple Deep Epsilon Greedy method (only 1 hidden layer). The Deep Epsilon Greedy method uses 3 convolutional layers followed by a fully connected layer of width 100. The Simple Deep Epsilon Greedy method uses just one fully connected layer of width 100. For all methods, training is every 20 time steps. For the neural networks, number of training epochs is always 16 and the initial learning rate is $10^{-3}$. We plot the reward normalized (divided by the time step) in Figures \ref{fig_low} and \ref{fig_high}. Each curve is an average of 12 independent runs. 

\begin{figure}[t]
\begin{center}
\centerline{
\includegraphics[scale=.6]{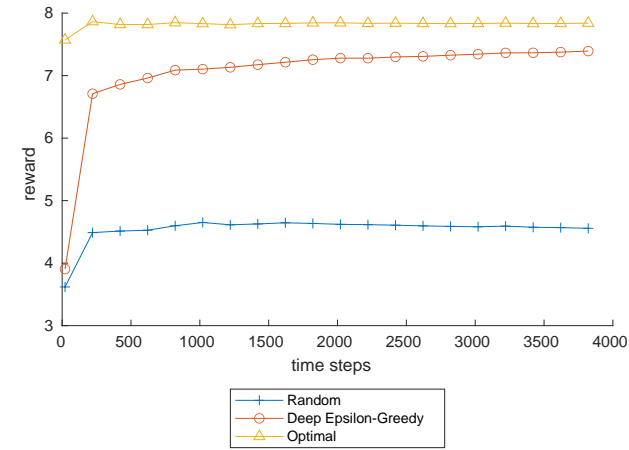}
}
\centerline{
\includegraphics[scale=.6]{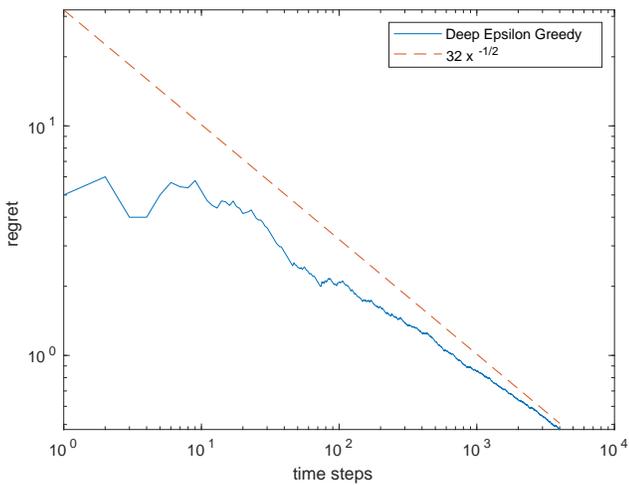}
}
\caption{Deep Epsilon Greedy method convergence of regret to 0 at rate $x^{-1/2}$. Plotting normalized reward of optimal method minus normalized reward of Deep Epsilon Greedy method. No noise added to MNIST dataset. Single run with 1000 neurons in the fully connected, final layer. }
\label{fig_convergence}
\end{center}
\end{figure}

We see that in both the high noise and low noise case, Deep Epsilon Greedy converges to the optimal but Simple Deep Epsilon Greedy does not. Simple Deep Epsilon does not have the necessary complexity to converge required by Theorem \ref{nnet}. Because the neural network does not converge, the regret in this policy does not converge to 0. The LinUCB and Linear Regression also cannot converge to the solution because they are linear models but this is a nonlinear problem. So they perform as well as purely random actions. 

\begin{figure}[H]
\begin{center}
\centerline{\includegraphics[scale=.6]{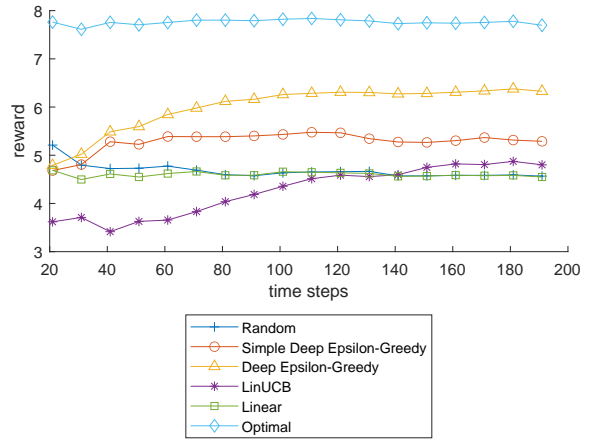}}
\caption{Low Noise: Mean reward normalized (divide by time step) plotted over time steps for each method. Task is to choose the largest MNIST image (digit) of 5 random images. No Gaussian noise added to reward. Mean is over 12 independent runs. }
\label{fig_low}
\end{center}
\end{figure}

\begin{figure}[H]
\begin{center}
\centerline{\includegraphics[scale=.6]{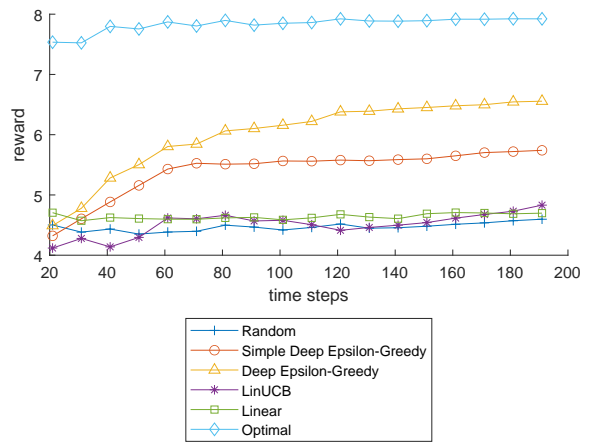}}
\caption{High Noise: Mean reward normalized (divide by time step) plotted over time steps for each method. Task is to choose the largest MNIST image (digit) of 5 random images. Gaussian noise, sigma = 1, added to reward. Mean is over 12 independent runs. }
\label{fig_high}
\end{center}
\end{figure}

\section{Summary}
We have shown convergence guarantees for the Deep Epsilon Greedy method, Algorithm \ref{algo:deep_ep_greedy}. In Corollary \ref{generalization}, we have shown convergence of generalizations to other common predictive models. We have shown convergence failure with $\epsilon_t = 1/t^p$ for $p>1$. In Corollary \ref{opt_p}, we showed that $\epsilon_t = 1/t^{1/3}$\ gives the fastest convergence bound. To see these results in experiments, we perform a standard MNIST \cite{mnist} experiment. The converging methods vs non-converging methods is empirically confirmed and displayed in Figures \ref{fig_convergence}, \ref{fig_low}, and \ref{fig_high}.



\bibliography{example_paper}
\bibliographystyle{icml2022}

\appendix
\onecolumn
\twocolumn
\section*{Appendix}

\begin{proof}[Proof of lemma \ref{lemma:epsilon_greedy_bound}, part II.]
Fix $i$. Recall $\epsilon_t = 1/t^p$. Following the proof of theorem 3 in \cite{auer_finite-time_2002}, 
\begin{align*}
 \E( T^R_i(t) ) &= \sum_{l=1}^{t-1} \bbP(\eta < \epsilon_l \wedge \rho = i) 
 = \sum_{l=1}^{t-1} \bbP(\eta < \epsilon_l) \bbP(\rho = i) \\
 &= \sum_{l=1}^{t-1} \epsilon_l/K 
 =\frac{1}{K} \sum_{l=1}^{t-1} \frac{1}{l^p} 
 \ge \frac{1}{(1-p) K} \ t^{1-p}
\end{align*}
Set $B(t):=\frac{1}{K} \sum_{l=1}^{t-1} \frac{1}{l^p} $. we have
\begin{align*}
Var(T^R_i(t)) &= \sum_{l=1}^{t-1} \frac{\epsilon_l}{K}(1-\frac{\epsilon_l}{K}) 
\le \frac{1}{K} \sum_{l=1}^{t-1} \epsilon_l \\
&=\frac{1}{K} \sum_{l=1}^{t-1} \frac{1}{l^p} 
=B(t).
\end{align*}
By Bernstein’s inequality 
\begin{align*}
 \bbP(&T^R_i(t)  \le B(t)/2 ) 
 = \bbP\left(T^R_i(t) - B(t) \le -B(t)/2 \right) \\
&\le \exp\left(\frac{-B(t)^2/8}{Var(T^R_i(t))+\frac{1}{3} B(t)/2}\right)\\ 
&\le \exp\left(\frac{-B(t)^2/8}{B(t)+\frac{1}{3} B(t)/2}\right) \\
&\le \exp\left(-\frac{3 B(t)}{28}\right)
 \le \exp\left(-\frac{3 \ t^{-p+1}}{28 (-p+1) K}\right). 
\end{align*}
So by union bound
$$ \bbP\left(\bigvee_{i=1}^K \{T^R_i(t) 
 \le \frac{t^{-p+1}}{2(-p+1) K} \}\right) $$
$$ \le K \ \bbP\left( T^R_1(t) \le \frac{t^{-p+1}}{2(-p+1) K}\right) $$
$$ \le K \ \bbP\left( T^R_1(t) \le \frac{B(t)}{2}\right) $$
$$ \le K\exp\left(-\frac{3 \ t^{-p+1}}{28 (-p+1) K}\right). $$
And
$$ \bbP\left(\bigwedge_{i=1}^K \{T^R_i(t) \ge \frac{t^{-p+1}}{2(-p+1) K}\}\right) $$
$$ \ge 1-K\exp\left(-\frac{3 \ t^{-p+1}}{28 (-p+1) K}\right). $$
This prove Equation \eqref{lemma:epsilon_greedy_bound2}. A similar proof for Equation \eqref{lemma:epsilon_greedy_bound1} is above. 
\end{proof}

\begin{proof}[Proof of theorem \ref{thm:epsilon_greedy_bound_p}] 

Let $*$ be an optimal action at $X_t$ and $R$ the reward from the epsilon greedy method. 


Let $\Phi_{i,t}$ be the trained neural network for action i and have t parameters.
By lemma \ref{lemma:epsilon_greedy_bound}, with probability greater than 
$1-K\exp\left(-\frac{3 \ t^{-p+1}}{28 (-p+1) K}\right)$,\ 
$T^R_i(t) \ge \frac{t^{-p+1}}{2(-p+1) K}$ for all $i$. In this case, we have 

\begin{align*}
 \E_{X_t} & \E_{I_t} \E_R[R_*(X_t) - R(X_t)] \\
&  = \E_{X_t} [\mu_*(X_t) - \E_{I_t} \E_R R(X_t) ] \\
& = \E_{X_t} [\mu_*(X_t) - \sum_{i=1}^K \mu_i(X_t) \bbP(I_t=i | X_t) ] \\
& = \E_{X_t} \sum_{i} \Delta_i(X_t) \bbP(I_t=i | X_t) \\
& = \sum_{i} \E_{X_t} \Delta_i(X_t) \bbP(I_t=i | X_t) 
\end{align*}
Then
\begin{align*}
 \E_{X_t} [ & \Delta_i(X_t) \bbP(I_t=i | X_t) ] \\
 & \le \E_{X_t} \Delta_i(X_t) [\epsilon_t/K + \bbP(\Phi_{i,t}(X_t) \ge \Phi_{*,t}(X_t))] 
\end{align*}
and, with Markov's inequality,
\begin{align*}
\bbP ( &\Phi_{i,t}(X_t) \ge \Phi_{*,t}(X_t) ) \\
&\le \bbP(\Phi_{i,t}(X_t) \ge \mu_i(X_t) + \Delta_i(X_t)/2) + \\ 
& +\bbP(\Phi_{*,t}(X_t) \le \mu_*(X_t) - \Delta_i(X_t)/2) \\
& = \int_{\bbI\{\Phi_{i,t}(X_t) \ge \mu_i(X_t) + \Delta_i(X_t)/2\}} dP_i + \\
& + \int_{\bbI\{\Phi_{*,t}(X_t) \le \mu_*(X_t) - \Delta_i(X_t)/2\}} dP_* \\
& \le \int_{\bbI\{|\Phi_{i,t}(X_t) - \mu_i(X_t)| \ge \Delta_i(X_t)/2\}} dP_i + \\
& + \int_{\bbI\{|\Phi_{*,t}(X_t)-\mu_*(X_t)| \ge \Delta_i(X_t)/2\}} dP_* \\
& \le \int \frac{|\Phi_{i,t}(X_t) - \mu_i(X_t)|^2 }{\Delta_i(X_t)^2/4} dP_i \\
& +  \int \frac{|\Phi_{*,t}(X_t)-\mu_*(X_t)|^2 }{\Delta_i(X_t)^2/4} dP_* 
\end{align*}

Then
\begin{align*}
 \E_{X_t} [ & \Delta_i(X_t) \bbP(I_t=i | X_t) ] \\
& \le \E_{X_t} \Delta_i(X_t) \epsilon_t/K + \\
& + \E_{X_t} \Delta_i(X_t) \int \frac{|\Phi_{i,t}(X_t) - \mu_i(X_t)| }{\Delta_i(X_t)/2} dP_i + \\
& + \E_{X_t} \Delta_i(X_t) \int \frac{|\Phi_{*,t}(X_t)-\mu_*(X_t)| }{\Delta_i(X_t)/2} dP_* 
\end{align*}

\begin{align*}
& \le \E_{X_t} \Delta_i(X_t) \epsilon_t/K \\
& + \frac{4}{\delta} \int_{x_t : i\ne *} \int |\Phi_{i,t}(x_t) - \mu_i(x_t)|^2 dP_i dP_{x_t}  + \\
& + \frac{4}{\delta} \int_{x_t : i\ne *} \int |\Phi_{*,t}(x_t)-\mu_*(x_t)|^2 dP_* dP_{x_t},
\end{align*}
by dominated convergence,
\begin{align*}
& \le \E_{X_t} \Delta_i(X_t)\ \epsilon_t/K \\
& + \frac{4}{\delta} \int \int_{x_t } |\Phi_{i,t}(x_t) - \mu_i(x_t)|^2 dP_{x_t} dP_i + \\
& + \frac{4}{\delta} \int \int_{x_t } |\Phi_{*,t}(x_t)-\mu_*(x_t)|^2 dP_{x_t} dP_* 
\end{align*}

Recall that we are in the case that 
$T^R_i(t) \ge \frac{t^{-p+1}}{2(-p+1) K}$ for all $i$. Let $C_i$ be the constant from Theorem \ref{nnet} for neural network $i$ and let $n_i$ be the minimal value of the training data size such that equation \ref{eq:nnet} holds. Choose $t_0>(2 (1-p) K \max\{e,\max_i n_i\})^{1/(1-p)}$. Since the map $x \mapsto \sqrt{\frac{ln(x)}{x}}$ is monotone decreasing for $x>e$, the above expression is further upper bounded by 
\begin{align*}
& \le \E_{X_t} \Delta_i(X_t)\ \frac{\epsilon_t}{K}
 + \frac{4}{\delta}  C_i\sqrt{\frac{\ln(T_i(t))}{T_i(t)}} 
 + \frac{4}{\delta}  C_*\sqrt{\frac{\ln(T_*(t))}{T_*(t)}}  \\
& \le \E_{X_t} \Delta_i(X_t)\ \frac{\epsilon_t}{K}
 + \frac{4C_i}{\delta}  \sqrt{\frac{\ln(T^R_i(t))}{T^R_i(t)}} 
 + \frac{4C_*}{\delta}  \sqrt{\frac{\ln(T^R_*(t))}{T^R_*(t)}}  \\
& \le \frac{\E_{X_t} \Delta_i(X_t)}{K t^p} \\
& + \left[\frac{4C_i}{\delta}  
 + \frac{4C_*}{\delta}\right] 
 \sqrt{\frac{\ln(t^{-p+1}/(2(-p+1) K))}{t^{-p+1}/(2(-p+1) K)}}
\end{align*}
So
\begin{align*}
& \E_{X_t} \E_{I_t} \E_R [R_*(X_t) - R(X_t)] \\
& = \E_{X_t} [\mu_*(X_t) - \E_{I_t} \E_R R(X_t) ] \\
& = \sum_{i} \E_{X_t} \Delta_i(X_t) \bbP(I_t=i | X_t) \\
& \le \frac{\max_i \E_{X_t} \Delta_i(X_t)}{t^p} \\
& + K^{3/2}\sqrt{2(1-p)}\left[\frac{4 \max_i C_i}{\delta} 
 + \frac{4C_*}{\delta}\right] \\ 
& \cdot \sqrt{\frac{\ln(t^{-p+1})-\ln(2(-p+1)K)}{t^{-p+1}}} 
\end{align*}
from where (\ref{eq:epsilon_greedy_bound_p3}) follows. To prove the lower bound, we have, for $i$ not optimal, that  
\begin{align*}
    \E_{X_t} [ \Delta_i(X_t) \bbP(I_t=i | X_t) ] 
    &\ge \E_{X_t} \Delta_i(X_t) \epsilon_t/K  \\
    &\ge \E_{X_t} \delta \epsilon_t/K  
    \ge \delta /(K t^p).
\end{align*}
Then using the suboptimal action, assumed to exist, we get 
\begin{align*}
 \E_{X_t} & \E_{I_t} \E_R[R_*(X_t) - R(X_t)] \\
& = \sum_{i} \E_{X_t} \Delta_i(X_t) \bbP(I_t=i | X_t) 
 \ge \delta /(K t^p).  
\end{align*}
\end{proof}

\begin{algorithm}[h]
\SetAlgoLined
    \textbf{Input:}\\
    $M \in \mathbb{N}$ : Total time steps \\
    $m \in \mathbb{N}$ : Context dimension \\
    $ X \in \mathbb{R}^{M \times m}$ where state $X_{t}\in\mathbb{R}^m$ for time step $t$  \\ 
    $ A = \{action_1,...,action_K\}$ : Available Actions \\
    $ Reward : \mathbb{N}_{[1,K]} \rightarrow \mathbb{R}$  \\
    \textbf{Output:}\\
    $ D \in \N^M $ : Decision Record \\
    $ R \in \mathbb{R}^{M}$ where $R_{t}$ stores the reward from time step $t$  \\ 
    \textbf{Begin:}\\
    \For{t = 1, 2, ..., $M$}{
        $ \rho \sim $ Uniform(\{1,...,K\})  \\
        $D_{t} = A_{\rho}$ (Choose Random Action) \\
        $R_{t} = Reward(D_{t})$ 
    }
\caption{Random}
\label{algo:random}
\end{algorithm}

\begin{algorithm}[h]
\SetAlgoLined
    \textbf{Input:}\\
    $M \in \mathbb{N}$ : Total time steps \\
    $m \in \mathbb{N}$ : Context dimension \\
    $ X \in \mathbb{R}^{M \times m}$ where state $X_{t}\in\mathbb{R}^m$ for time step $t$  \\ 
    $ A = \{action_1,...,action_K\}$ : Available Actions \\
    $ Reward : \mathbb{N}_{[1,K]} \rightarrow \mathbb{R}$  \\
    $ oracle : \mathbb{N}_{[1,M]} \rightarrow \mathbb{N}_{[1,K]}$ : Oracle for correct action index \\
    \textbf{Output:}\\
    $ D \in \N^M $ : Decision Record \\
    $ R \in \mathbb{R}^{M}$ where $R_{t}$ stores the reward from time step $t$  \\ 
    \textbf{Begin:}\\
    \For{t = 1, 2, ..., $M$}{
        $D_{t} = oracle(t)\ $ (Choose Correct Action) \\
        $R_{t} = Reward(D_{t})$ 
    }
\caption{Optimal}
\label{algo:optimal}
\end{algorithm}

\begin{algorithm}[h]
\SetAlgoLined
    \textbf{Input:}\\
    $M \in \mathbb{N}$ : Total time steps \\
    $m \in \mathbb{N}$ : Context dimension \\
    $ X \in \mathbb{R}^{M \times m}$ where state $X_{t}\in\mathbb{R}^m$ for time step $t$  \\ 
    $ A = \{action_1,...,action_K\}$ : Available Actions \\
    $ Reward : \mathbb{N}_{[1,K]} \rightarrow \mathbb{R}$  \\
    \textbf{Output:}\\
    $ B_j \in \R^{m} : $ Linear Models for $1\le j \le K$ \\
    $ D \in \N^M $ : Decision Record \\
    $ R \in \mathbb{R}^{M}$ where $R_{t}$ stores the reward from time step $t$  \\ 
    \textbf{Begin:}\\
    \For{j = 1, 2, ..., K}{
        $B_j = 0$
    }
    \For{t = 1, 2, ..., $M$}{
        \For{j = 1, 2, ..., K}{
            $ \hat \mu_{action_j} = B_j^T\ X_{t} $ (predict rewards)
        }
        $D_{t} =  \arg \max_{1 \le j \le K } \ \hat \mu_{action_j} $ \\
        $R_{t} = Reward(D_{t})$ \\
        (Training Stage)\\
        $ S = \{l : 1\le l\le t,\ D_l = D_t\}$ \\
        $ B_{D_t} = \arg\min_B \|R_{S} - B X_{S}^T\|_2$
    }
\caption{Linear}
\label{algo:linear}
\end{algorithm}

\begin{algorithm}[h]
\SetAlgoLined
    \textbf{Input:}\\
    $M \in \mathbb{N}$ : Total time steps \\
    $m \in \mathbb{N}$ : Context dimension \\
    $ X \in \mathbb{R}^{M \times m}$ where state $X_{t}\in\mathbb{R}^m$ for time step $t$  \\ 
    $ A = \{action_1,...,action_K\}$ : Available Actions \\
    $ Reward : \mathbb{N}_{[1,K]} \rightarrow \mathbb{R}$  \\
    \textbf{Output:}\\
    $ B_j \in \R^{m \times m} : $ Linear Maps for $1\le j \le K$ \\
    $ b_j \in \R^{m} : $ Linear Models for $1\le j \le K$ \\
    $ D \in \N^M $ : Decision Record \\
    $ R \in \mathbb{R}^{M}$ where $R_{t}$ stores the reward from time step $t$ \\ 
    \textbf{Begin:}\\
    \For{j = 1, 2, ..., K}{
        $B_j = I$ \\
        $b_j = 0$
    }
    \For{t = 1, 2, ..., $M$}{
        \For{j = 1, 2, ..., K}{
            $\Theta = B^{-1}_j b_j $ \\
            (Predict Rewards)\\
            $ \hat \mu_{action_j} = \Theta^T \ X_{t} + \sqrt{ X_t^T B^{-1}_j X_t }\ $ 
        }
        $D_{t} =  \arg \max_{1 \le j \le K } \ \hat \mu_{action_j} $ \\
        $R_{t} = Reward(D_{t})$ \\
        (Training Stage)\\
        $ B_{D_t} = B_{D_t} + X_t X_t^T$ \\
        $ b_{D_t} = b_{D_t} + R_t X_t$
    }
\caption{LinUCB}
\label{algo:linucb}
\end{algorithm}


\end{document}